\documentclass[letterpaper]{article} 
\usepackage{aaai2027}  
\usepackage[hyphens]{url}  
\usepackage{graphicx} 
\usepackage{natbib}  
\usepackage{caption} 
\usepackage{algorithm}
\usepackage{algorithmic}

\usepackage{newfloat}
\usepackage{listings}
\DeclareCaptionStyle{ruled}{labelfont=normalfont,labelsep=colon,strut=off} 
\floatstyle{ruled}
\newfloat{listing}{tb}{lst}{}
\floatname{listing}{Listing}

\usepackage{booktabs}
\usepackage{multirow}
\usepackage{amsmath,amssymb,amsthm,bm}
\usepackage{xspace}

\newcommand{\MSSA}{\operatorname{MSSA}}

\newcommand{\SIGReg}{\mathcal{L}_{\mathrm{SIGReg}}}

\newcommand{\Rcomp}{R^{c}}
\newcommand{\ourmodel}{AoT-ADMM\xspace}

\DeclareMathOperator{\tr}{tr}
\theoremstyle{plain}
\newtheorem{proposition}{Proposition}
\theoremstyle{remark}

\title{Attention-Only White-Box Transformer via LeJEPA-Based Self-Supervised Pretraining}

\author{
    Yang Bai \textsuperscript{\rm 1},
    Linyuan Wang \textsuperscript{\rm 1},
    Haoyang Jiang \textsuperscript{\rm 1},
    Nuolin Sun \textsuperscript{\rm 1},
    Libin Hou \textsuperscript{\rm 1},
    Bin Yan\corresponding \textsuperscript{\rm 1}
}

\affiliations{
    \textsuperscript{\rm 1}Information Engineering University, ZhengZhou,China\\
    yangbai\_111111@163.com, ybspace@hotmail.com
}

\begin{document}

\maketitle

\begin{abstract}
Existing studies on self-supervised learning for white-box networks typically decouple the derivation of white-box networks via optimization algorithms from self-supervised learning paradigms. In this work, we instead revisit the two components from a joint perspective. The LeJEPA-based self-supervised framework assumes an isotropic Gaussian distribution as the optimal embedding distribution for downstream tasks, which is conceptually equivalent to the expansion term $R(Z)$ in the sparse rate reduction objective guiding white-box Transformer optimization. Building on this observation, we use the LeJEPA self-supervised paradigm to optimize $R(Z)$, and derive the remaining terms $R^{c}(Z\mid U_{[K]})+\lambda\lVert Z\rVert_{0}$ via the alternating direction method of multipliers (ADMM) into an attention-only Transformer that dispenses with the ISTA structure or MLP layers of the original design. Experimental results demonstrate that our attention-only white-box Transformer achieves classification accuracies of $88.88\%$ on CIFAR-10 and $63.54\%$ on CIFAR-100 at the Base scale under the LeJEPA self-supervised paradigm, while the original white-box Transformer CRATE achieves classification accuracies of $89.18\%$ on CIFAR-10 and $63.56\%$ on CIFAR-100. Our model achieves competitive performance while reducing the parameter count by roughly $31\%$. Beyond the white-box setting, we further investigate standard ViTs and find that replacing all MLP blocks with ReLU activations under knowledge distillation removes approximately 66\% of the parameters while preserving competitive accuracy, motivating further investigation into the potential redundancy of MLP modules in standard ViT architectures.
\end{abstract}


\section{Introduction}
Transformers have become a foundational architecture in modern deep learning
and have achieved remarkable performance in computer vision, natural language processing, and other fields \citep{vaswani2017attention,dosovitskiy2021vit}.
However, the design of the Transformer architecture and its many variants remains largely empirical and lacks a rigorous mathematical interpretation \citep{he2024simplifying,dong2021attention,caron2021dino,chen2021mocov3}. This has largely hindered the development of new Transformer variants with improved efficiency or interpretability. 

In recent years, white-box Transformer architectures, exemplified by the Coding Rate Transformer (CRATE), have offered a promising new perspective on addressing the aforementioned problem. Grounded in the principle of sparse rate reduction, CRATE characterizes an ideal representation through the coordinated interplay among an expansion term \(R(Z)\), a compression term \(R^{c}(Z\mid U_{[K]})\), and a sparsity-inducing term \(\lambda\lVert Z\rVert_0\) \citep{yu2023crate}. In the resulting unrolled architecture, the compression term is approximated by a single gradient-descent step, giving rise to a Multi-Head Subspace Self-Attention (MSSA) module. Meanwhile, the expansion and sparsity terms are integrated through the introduction of an orthogonal sparse dictionary \(D\) and jointly optimized using an iterative shrinkage-thresholding algorithm (ISTA) module. This unrolling framework explicitly interprets the computations performed by the core network components as iterative steps for solving well-defined objective subproblems, thereby providing the Transformer with a clear and unified mathematical interpretation.

Building on this foundation, white-box Transformer architectures have continued to evolve in recent years. CRATE-\(\alpha\) improves both the sparse coding module and the training strategy, substantially enhancing the model’s scalability to large-scale vision tasks \citep{yang2024cratealpha}. The Token Statistics Transformer (ToST), meanwhile, derives an interpretable attention mechanism with linear time and memory complexity from a variational formulation of the maximal coding rate reduction objective \citep{wu2024tost}. In parallel, this line of research has been extended to a diverse range of tasks, including unsupervised object segmentation, language modeling, and long-sequence modeling, highlighting the broad potential of white-box networks in terms of interpretability, computational efficiency, and cross-domain applicability \citep{yu2023segmentation,pai2024cratemae}.

White-box networks seek to uncover the low-dimensional subspace structure underlying token representations directly from data. Because coding-rate objectives can be formulated without semantic labels, they provide a possible route to self-supervised representation learning \citep{yu2020mcr2}. Several recent studies have attempted to bridge this gap. For instance, CRATE-MAE incorporates white-box encoders and decoders into a masked autoencoding framework \citep{pai2024cratemae,he2022mae}, while EMP-SSL introduces the total coding rate \(R(Z)\) into a self-supervised objective to promote representation expansion \citep{tong2023empssl}. However, these approaches remain subject to notable limitations: the former relies on a reconstruction loss that lacks a clear theoretical interpretation, whereas the latter retains a black-box network architecture. These limitations motivate a fundamental question: how can models be trained under a self-supervised paradigm while preserving theoretical self-consistency between the white-box architecture and its learning objective?

To address this question, we revisit the three components of the white-box
objective: the expansion term is transferred to and optimized through the
LeJEPA self-supervised objective, while the compression and sparsity terms are
optimized using the alternating direction method of multipliers (ADMM) \citep{boyd2011admm,doi:10.1137/090777761} and unrolled into an attention-only Transformer architecture. Our approach builds on the close correspondence, in representation space, between the expansion term in the white-box objective and the isotropy constraint imposed by LeJEPA \citep{balestriero2025lejepa,assran2023ijepa}. Based on this correspondence, we formulate a new optimization objective by replacing the expansion term \(R(Z)\) with the LeJEPA objective, while retaining the subspace compression term \(R^{c}(Z\mid U_{[K]})\) and the sparsity term \(\lambda\lVert Z\rVert_0\). We then optimize the resulting objective using ADMM. Unrolling the resulting ADMM iterations yields a white-box self-supervised architecture consisting solely of attention modules. We refer to the resulting Attention-Only Transformer derived from ADMM as \ourmodel. The overall self-supervised training framework of \ourmodel is illustrated in Figure~\ref{fig:architecture}.

Our main contributions are summarized as follows:
\begin{itemize}
    \item We propose an attention-only white-box Transformer based on the LeJEPA self-supervised objective. The model uses the LeJEPA self-supervised learning paradigm to optimize the expansion term $R(Z)$ in maximal sparse rate reduction, while the remaining compression and sparsity terms follow the white-box Transformer derivation paradigm, thereby combining self-supervised learning with a white-box optimization objective.
    \item We optimize the expansion term $R(Z)$ with the LeJEPA self-supervised learning paradigm and optimize the remaining white-box objective terms using ADMM. The compression term corresponds to the attention structure of the Transformer, while the sparsity term admits a closed-form solution through a ReLU operator, ultimately yielding an attention-only white-box Transformer.

    \item Experimental results show that \ourmodel achieves performance comparable to CRATE while reducing parameters by approximately $31\%$. At the Base scale, our model reaches $88.88\%$ on CIFAR-10 and $63.54\%$ on CIFAR-100, compared with CRATE's $89.18\%$ and $63.56\%$. It also consistently outperforms AoT at similar parameter counts. On standard ViTs, replacing all MLP blocks with ReLU activations under knowledge distillation removes approximately $66\%$ of the parameters while preserving competitive accuracy, further demonstrating the potential redundancy of MLP modules.

\end{itemize}

\section{Related Work}
\subsection{White-Box Self-Supervised Learning}
Conventional Transformer modules are typically selected through empirical trial and error \citep{he2024simplifying,dong2021attention}, without a strict mathematical correspondence between the architecture and its learning objective. CRATE provides an alternative by deriving its basic operators from sparse rate reduction, such that each network operation has an explicit optimization interpretation \citep{yu2023crate}. This theoretical property makes white-box models naturally suited to self-supervised representation learning. Nevertheless, previous attempts have not established a consistent theoretical source for both the training objective and the network architecture. CRATE-MAE adopts a white-box encoder but still relies on an uninterpretable reconstruction loss \citep{pai2024cratemae}, whereas EMP-SSL introduces the total coding rate $R(Z)$ but continues to employ a black-box architecture \citep{tong2023empssl}. In contrast, we first establish the theoretical connection between an isotropic Gaussian prior and the coding-rate expansion term $R(Z)$ and use it as the training objective, and then derive the complete network architecture by solving the remaining white-box objective terms. To our knowledge, this is the first self-supervised formulation in which both the training loss and the forward computation originate from a unified white-box theoretical framework.

\subsection{Attention-Only Transformers}
Transformer simplification studies have shown that some block components can be removed or rearranged \citep{he2024simplifying}. However, naively removing the feed-forward branch can weaken representation diversity, and theoretical analysis has identified rank-collapse behavior in pure-attention stacks \citep{dong2021attention}. An interesting study by Wang et al. provides such a derivation:
they model tokens as a noisy mixture of low-rank Gaussians, treat representation learning as compressing noisy tokens back onto their subspaces, read multi-head subspace self-attention as an iterative denoising operator, and unroll that operator into a white-box Attention-Only Transformer (AoT) built from attention and residual connections alone \citep{wang2025aot}. Their target is subspace denoising, so the
attention-only form follows from recovering clean representations rather than
from optimizing a full representation-learning objective.
In contrast, our layer is derived from a different optimization problem: both its attention-only architecture and its training procedure emerge naturally from the white-box objective. Specifically, we preserve the sparse rate reduction objective, move its expansion term into the training loss, and obtain the attention-only architecture by unrolling the solution to the remaining compression and sparsity terms.

\begin{figure*}[htbp]
\centering
\includegraphics[width=0.88\textwidth]{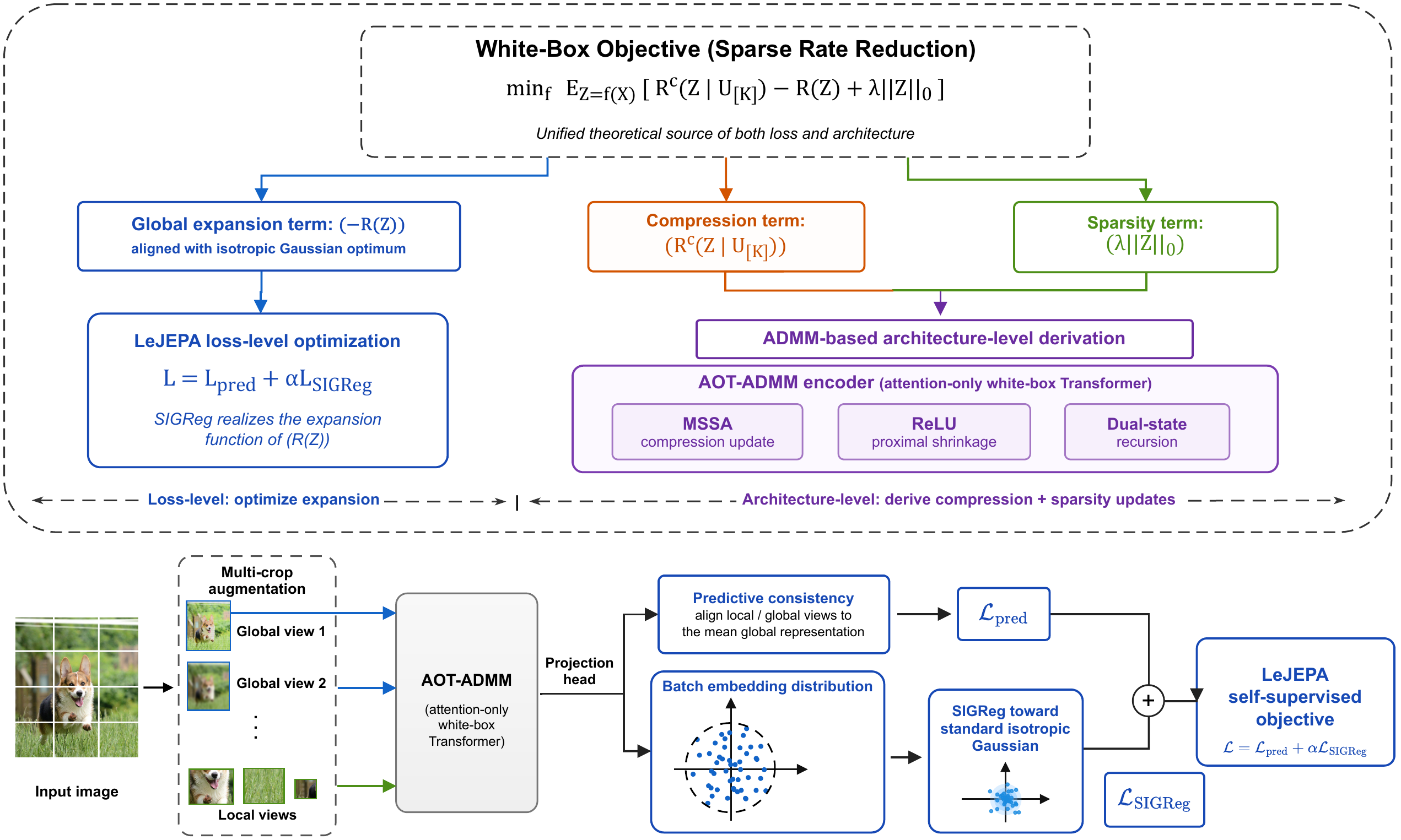}
\caption{Overview of the proposed white-box self-supervised framework. LeJEPA optimizes the global expansion term, while ADMM unrolls the compression and sparsity terms into the attention-only AoT-ADMM encoder, deriving both the training objective and architecture from a unified white-box objective.
}
\label{fig:architecture}
\end{figure*}

\section{Method}

This section presents the complete derivation of an attention-only white-box Transformer under LeJEPA-based self-supervised pretraining. As illustrated in Figure~\ref{fig:architecture}, the proposed framework derives both the training objective and the network architecture from the sparse rate reduction principle. We first establish the connection between Sketched Isotropic Gaussian Regularization (SIGReg) in LeJEPA and the global expansion term \(R(Z)\), which enables the original objective to be separated into loss-level and architecture-level components. We then apply ADMM to the remaining compression and sparsity terms, yielding a three-state recursion. Finally, these ADMM iterations are unrolled along network depth to construct the attention-only forward architecture.

\subsection{Optimizing the Global Expansion Term}

Following maximal coding-rate reduction and its sparse white-box Transformer formulation \citep{yu2020mcr2,yu2023crate}, sparse rate reduction characterizes a desirable representation as globally expanded, compact within low-dimensional subspaces, and sparse. Let
$Z=f(X)\in\mathbb{R}^{d\times N}$ denote $N$ token representations and let
$U_{[K]}=(U_k)_{k=1}^{K}$ denote a collection of low-dimensional subspace
bases. The corresponding objective is
\begin{equation}
\min_{f\in\mathcal{F}}\;
\mathbb{E}_{Z=f(X)}
\left[
\Rcomp(Z\mid U_{[K]})
-R(Z)
+\lambda\lVert Z\rVert_0
\right],
\label{eq:srr}
\end{equation}
where the global coding rate is defined as
\begin{equation}
R(Z) =
\frac{1}{2}
\log\det\left(
I_d+\frac{d}{N\epsilon^2}ZZ^\top
\right).
\label{eq:global_rate}
\end{equation}
By maximizing the log-determinant of the representation covariance, $R(Z)$
encourages the representations to occupy a large volume in the ambient space
and thereby prevents global collapse. Equivalently, under a fixed
representation energy, maximizing $R(Z)$ favors a balanced covariance
spectrum and discourages the representations from concentrating along only a
few directions. Total coding-rate maximization has accordingly been
interpreted as a soft covariance regularization mechanism closely related to
covariance-based self-supervised objectives such as VICReg
\citep{tong2023empssl}.

SIGReg similarly
encourages the embedding distribution to remain globally expanded by driving
it toward a standard isotropic Gaussian. In practice, SIGReg is evaluated on
the embeddings used by the LeJEPA objective, which we denote by $Z'$ to
distinguish them from the token matrix $Z$ in
Eq.~\eqref{eq:srr}. The two variables are produced within the same
representation-learning pipeline, but need not denote the same finite-sample
matrix. Our correspondence concerns the expansion effect that the two
objectives impose on the learned representation distribution.

SIGReg measures the discrepancy between random one-dimensional projections
of the embedding distribution and a standard Gaussian using an Epps--Pulley
characteristic-function statistic \citep{balestriero2025lejepa}. Around the
isotropic Gaussian optimum, the discrepancy for centered embeddings can be
decomposed as
\begin{equation}
\SIGReg(Z')
=
C_w\,
\mathbb{E}_{a\sim\mathrm{Unif}(\mathbb{S}^{d-1})}
\left[
\bigl(a^\top(\Sigma-I_d)a\bigr)^2
\right]
+
\mathcal{L}_{>2},
\label{eq:sigreg}
\end{equation}
where $C_w>0$, $\Sigma$ denotes the covariance of $Z'$, and
$\mathcal{L}_{>2}$ collects deviations in skewness, kurtosis, and
higher-order statistics. We denote the covariance-level component in
Eq.~\eqref{eq:sigreg} by
$\mathcal{L}_{\mathrm{SIGReg}}^{(2)}$.

\paragraph{Theorem 1 (Consistency of the expansion objectives).}
For centered embeddings with fixed normalized energy
$\operatorname{tr}(\Sigma)=d$, the coding-rate objective and the covariance
component of SIGReg share the same isotropic optimum:
\begin{equation}
\arg\max_{\substack{\Sigma\succeq 0\\
                    \operatorname{tr}(\Sigma)=d}}
R(\Sigma)
=
\arg\min_{\substack{\Sigma\succeq 0\\
                    \operatorname{tr}(\Sigma)=d}}
\mathcal{L}_{\mathrm{SIGReg}}^{(2)}(\Sigma)
=
\{I_d\}.
\label{eq:theorem}
\end{equation}
Moreover, around $\Sigma=I_d$, the coding-rate gap
$R(I_d)-R(\Sigma)$ and
$\mathcal{L}_{\mathrm{SIGReg}}^{(2)}(\Sigma)$ are both proportional, to
leading order, to $\lVert\Sigma-I_d\rVert_F^2$. The complete derivation is provided in Appendix~\ref{app:theorem1}.

Theorem~1 establishes that coding-rate maximization and SIGReg implement a
consistent expansion principle at the covariance level: both favor an
isotropic representation covariance and penalize directional concentration
around the optimum. They are nevertheless not globally identical objectives.
The coding rate depends only on second-order statistics, whereas SIGReg
additionally constrains higher-order deviations through
$\mathcal{L}_{>2}$. Thus, SIGReg preserves the covariance-expansion effect of
$R(Z)$ while strengthening it toward a full-distribution isotropic Gaussian
constraint.

This correspondence allows the global expansion effect in
Eq.~\eqref{eq:srr} to be incorporated into the LeJEPA training objective.
The compression and sparsity terms can then be retained explicitly for the
subsequent derivation of the network architecture.

\subsection{Optimizing the Compression and Sparsity Terms}

With the global expansion term handled by the training loss, the remaining
optimization problem is
\begin{equation}
\min_Z\;
\Rcomp(Z\mid U_{[K]})
+\lambda\lVert Z\rVert_0.
\label{eq:compression_sparsity}
\end{equation}
For $U_k\in\mathbb{R}^{d\times p}$ with $U_k^\top U_k=I_p$, the coding rate of
the representations projected onto the $k$-th subspace is defined as
\begin{equation}
R(U_k^\top Z) =
\frac{1}{2}
\log\det\left(
I_p+
\frac{p}{N\epsilon^2}
U_k^\top ZZ^\top U_k
\right),
\label{eq:subspace_rate}
\end{equation}
and the corresponding compression rate is
\begin{equation}
\Rcomp(Z\mid U_{[K]}) =
\sum_{k=1}^{K}R(U_k^\top Z).
\label{eq:compression_rate}
\end{equation}

To obtain a tractable sparsity surrogate, we replace the $\ell_0$ penalty by an $\ell_1$ penalty, as commonly done in sparse recovery \citep{chen1998basis}, and introduce an auxiliary variable $V$:
\begin{equation}
\min_{Z,V}\;
\Rcomp(Z\mid U_{[K]})
+\lambda\lVert V\rVert_1,
\quad
\mathrm{s.t.}\; Z=V.
\label{eq:split}
\end{equation}
Using the scaled-form ADMM with dual variable $W$ and penalty parameter $\rho>0$ \citep{boyd2011admm,eckstein1992douglas}, the augmented
Lagrangian, up to a term independent of $Z$ and $V$, is
\begin{equation}
\mathcal{L}_{\rho}(Z,V,W)
=
\Rcomp(Z\mid U_{[K]})
+\lambda\lVert V\rVert_1
+\frac{\rho}{2}\lVert Z-V+W\rVert_F^2.
\label{eq:auglag}
\end{equation}

The update of $Z$-step is given by
\begin{equation}
Z^{t+1}
=
\arg\min_Z\;
\Rcomp(Z\mid U_{[K]})
+\frac{\rho}{2}
\lVert Z-V^t+W^t\rVert_F^2.
\label{eq:zsub}
\end{equation}
Following CRATE, we approximate this subproblem with a single linearized gradient step \citep{yu2023crate,ouyang2015linearized}. In particular, the gradient of the compression rate is approximated as
\begin{equation}
\nabla_Z\Rcomp(Z\mid U_{[K]})
\approx
\gamma
\left[
Z-\MSSA(Z\mid U_{[K]})
\right],
\label{eq:compression_gradient}
\end{equation}
where $\gamma>0$ is determined by the coding precision, and multi-head
subspace self-attention is defined as
\begin{equation}
\MSSA(Z\mid U_{[K]}) =
\sum_{k=1}^{K}
U_kU_k^\top Z\,
\operatorname{softmax}\left(
Z^\top U_kU_k^\top Z
\right),
\label{eq:mssa}
\end{equation}

Combining Eq.~\eqref{eq:compression_gradient} with the quadratic penalty, the
gradient of the $Z$ subproblem is approximated by
\begin{equation}
\gamma
\left[
Z-\MSSA(Z\mid U_{[K]})
\right]
+\rho\left(
Z-V^t+W^t
\right).
\label{eq:zsub_gradient}
\end{equation}
A gradient step with step size $\eta>0$ therefore yields
\begin{align}
Z^{t+1}
={}&
(1-\eta\gamma-\eta\rho)Z^t
+\eta\gamma
\MSSA(Z^t\mid U_{[K]}^t)
\nonumber\\
&+\eta\rho(V^t-W^t).
\label{eq:zupdate}
\end{align}

The update of $V$-step is given by
\begin{equation}
V^{t+1}
=
\arg\min_V\;
\lambda\lVert V\rVert_1
+\frac{\rho}{2}
\lVert Z^{t+1}-V+W^t\rVert_F^2,
\end{equation}
whose solution is the proximal operator of the $\ell_1$ norm \citep{daubechies2004iterative,beck2009fista,parikh2014proximal},
\begin{equation}
V^{t+1}
=
\operatorname{shrink}_{\lambda/\rho}
(
Z^{t+1}+W^t
).
\end{equation}

Following CRATE, we further impose a non-negativity constraint on the intermediate representations. Under this constraint, the proximal mapping reduces to the one-sided soft-thresholding operator, yielding

\begin{equation}
V^{t+1}
=
\operatorname{ReLU}
(
Z^{t+1}+W^t-\lambda/\rho
).
\end{equation}

This update preserves the sparse proximal interpretation while remaining compatible with the nonlinear activation commonly adopted in transformer architectures.

\subsection{Attention-Only White-Box Transformer}

\begin{figure*}[htbp]
\centering
\includegraphics[width=15cm, height=8cm, keepaspectratio=false]{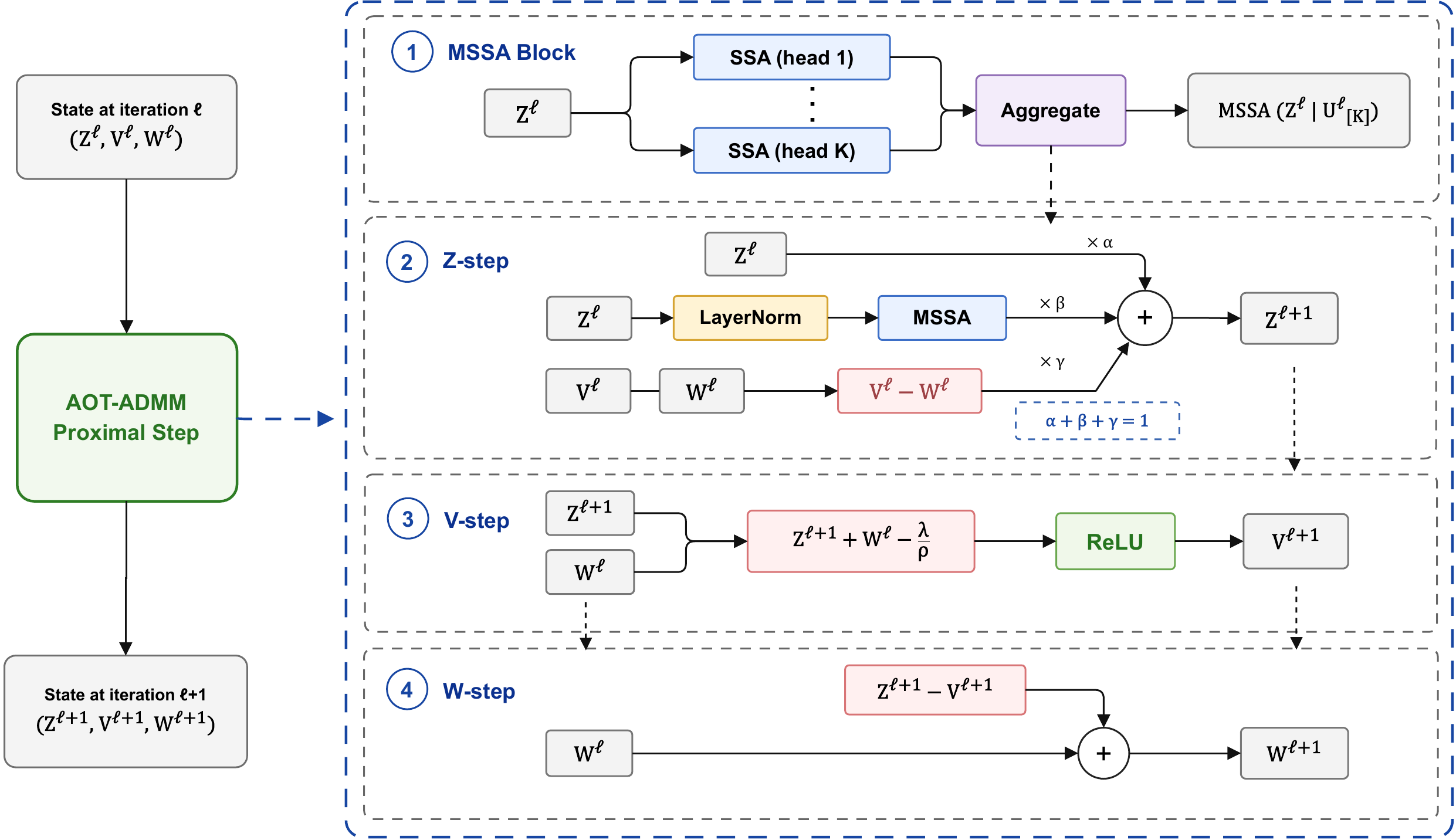}
\caption{One AoT-ADMM layer as an unrolled ADMM iteration. MSSA
realizes the compression update, the ReLU proximal operator implements
nonnegative shrinkage, and the scaled multiplier state tracks the consistency
between $Z$ and $V$. The three branch coefficients are initialized from the
ADMM-derived update and adapt dynamically during training.}
\label{fig:onelayer}
\end{figure*}

Following the algorithm-unrolling paradigm exemplified by learned sparse-coding iterations \citep{gregor2010lista}, mapping the ADMM iteration index $t$ to the layer index $\ell$ converts the
iterative updates into the forward computation of one network layer:
\begin{equation}
\left\{
\begin{aligned}[c]
Z^{\ell+1}
={}&
(1-\eta\gamma-\eta\rho)Z^\ell
+\eta\gamma\MSSA(Z^\ell\mid U_{[K]}^\ell) \\
&+\eta\rho(V^\ell-W^\ell), \\[2pt]
V^{\ell+1}
={}&
\operatorname{ReLU}
\left(
Z^{\ell+1}+W^\ell-\lambda/\rho
\right), \\[2pt]
W^{\ell+1}
={}&
W^\ell+Z^{\ell+1}-V^{\ell+1}.
\end{aligned}
\right.
\label{eq:layer_updates}
\end{equation}

The resulting network propagates the state triplet
$(Z^\ell,V^\ell,W^\ell)$ through depth, with each layer corresponding to one
unrolled ADMM iteration. As illustrated in Figure~\ref{fig:onelayer}, the
$Z$-step performs MSSA-based compression with an ADMM consistency correction,
the $V$-step applies nonnegative shrinkage through a ReLU proximal operator,
and the $W$-step updates the dual state to promote agreement between
$Z^{\ell+1}$ and $V^{\ell+1}$. All layers share the same ADMM-derived update
form, while using independently parameterized subspace bases
$U_{[K]}^\ell$ to adapt the compression model to the representation
distribution at each depth. Stacking $L$ such iterations yields the complete
attention-only encoder.

In implementation, the three branch coefficients in
Eq.~\eqref{eq:layer_updates} are initialized from the ADMM-derived update and
subsequently learned under a nonnegative unit-sum constraint. Figure~\ref{fig:onelayer} therefore
summarizes both the optimization interpretation and the forward computation of
a single \ourmodel layer.

We embed the resulting encoder in LeJEPA. Multi-crop augmentations \citep{caron2021dino} generate
global and local views of each image, which are mapped by the shared encoder
and projection head to their corresponding embeddings. A predictive
consistency loss aligns the local and global embeddings with the mean global
representation, while SIGReg regularizes the batch embedding distribution
toward a standard isotropic Gaussian. The total training objective is
\begin{equation}
\mathcal{L}
=
\mathcal{L}_{\text{pred}} + \alpha \, \mathcal{L}_{\text{SIGReg}},
\label{eq:total}
\end{equation}
where \( \alpha \) balances semantic invariance and distributional expansion.
The encoder, projection head, and layerwise subspace bases are optimized
jointly end to end.

\section{Experiments}
\subsection{Experimental Setup}
We conduct experiments on CIFAR-10, CIFAR-100
\citep{krizhevsky2009cifar}, and ImageNet-1K
\citep{deng2009imagenet}. For in-domain self-supervised learning, CRATE and
\ourmodel are pretrained on CIFAR-10 and CIFAR-100 using the same LeJEPA
objective and evaluated by linear probing with frozen backbones, following the standard self-supervised evaluation protocol \citep{chen2020simclr,grill2020byol}. We report the top-1 accuracy, parameters and flops as indicators of representation quality and model
complexity. We further pretrain \ourmodel on ImageNet-1K and fully fine-tune the resulting
backbone on CIFAR-10 and CIFAR-100 to evaluate large-scale pretraining and
transfer. All experiments are conducted on a single NVIDIA A100 (80GB) GPU; detailed software and hardware configurations are provided in the supplementary material.

For the direct comparison with existing attention-only architectures,
\ourmodel and Wang et al.'s AoT are evaluated at
comparable model scales under the same LeJEPA training protocol.

To further examine MLP redundancy beyond the white-box CRATE architecture, we
construct a standard ViT control by retaining its patch embedding,
self-attention, normalization layers, and classification head, while replacing
the MLP in every Transformer block with a ReLU activation. The resulting model
is trained with knowledge distillation and compared with the original ViT in
terms of Top-1 accuracy and parameter count.

For CIFAR pretraining, global and local views are $32\times32$ and $16\times16$, the patch size is 8, and each image produces two global and six local views. We train for 800 epochs with AdamW \citep{loshchilov2019adamw}, a learning rate of $5\times10^{-4}$, weight decay $5\times10^{-2}$, batch size 256, and cosine decay. For ImageNet-1K, global and local views are $128\times128$ and $64\times64$, patch size is 16, and each image produces two global and two local views. We train for 200 epochs with learning rate $5\times10^{-4}$ and batch size 1024; the resulting backbone is then fully fine-tuned on CIFAR.

\subsection{Main Results} We first evaluate in-domain linear probing against CRATE under exactly the same LeJEPA pretraining settings, with results summarized in Table~\ref{tab:cifar}. Across Tiny, Small, and Base, \ourmodel reduces the parameter count by approximately $31\%$ while maintaining competitive performance. At Tiny scale, the accuracy is only $0.10$ points lower on CIFAR-100. At Small scale, \ourmodel improves CIFAR-100 accuracy from $61.64\%$ to $62.41\%$, a gain of $0.77$ points. At Base scale, the gaps narrow to $0.30$ and $0.02$ points on CIFAR-10 and CIFAR-100, respectively.

\begin{table}[t]
\centering
{\setlength{\tabcolsep}{1mm}%
\begin{tabular}{llccc}
\toprule
Scale & Model & FLOPs & Params & Acc. (\%) \\
\midrule
\multicolumn{5}{l}{\textit{CIFAR-10}} \\
\multirow{2}{*}{Tiny}  & CRATE     & 0.31G & 5.41M  & 85.61 \\
                       & \ourmodel & 0.13G & 3.64M  & 84.22 \\
\cmidrule{1-5}
\multirow{2}{*}{Small} & CRATE     & 0.69G & 12.10M & 87.89 \\
                       & \ourmodel & 0.28G & 8.11M  & 87.23 \\
\cmidrule{1-5}
\multirow{2}{*}{Base}  & CRATE     & 1.22G & 21.44M & 89.18 \\
                       & \ourmodel & 0.50G & 14.36M & 88.88 \\
\midrule
\multicolumn{5}{l}{\textit{CIFAR-100}} \\
\multirow{2}{*}{Tiny}  & CRATE     & 0.31G & 5.41M  & 57.48 \\
                       & \ourmodel & 0.13G & 3.64M  & 57.38 \\
\cmidrule{1-5}
\multirow{2}{*}{Small} & CRATE     & 0.69G & 12.10M & 61.64 \\
                       & \ourmodel & 0.28G & 8.11M  & 62.41 \\
\cmidrule{1-5}
\multirow{2}{*}{Base}  & CRATE     & 1.22G & 21.44M & 63.56 \\
                       & \ourmodel & 0.50G & 14.36M & 63.54 \\
\bottomrule
\end{tabular}%
}
\caption{Linear-probe Top-1 accuracy, FLOPs, and parameters under LeJEPA self-supervised pretraining.}
\label{tab:cifar}
\end{table}

Figure~\ref{fig:training_dynamics} further compares the LeJEPA pretraining dynamics of CRATE and \ourmodel. Under identical training settings, the two models follow closely matched convergence trajectories and reach comparable final loss values after 800 epochs, indicating that removing the learnable ISTA dictionary does not adversely affect optimization or convergence.

These results show that the ADMM-derived attention-only structure preserves the representation quality of CRATE while removing the learnable ISTA dictionary and substantially reducing parameters. A consistent reduction in model parameters is observed across all scales: \ourmodel removes 1.77M (3.64M vs. 5.41M), 3.99M (8.11M vs. 12.10M), and 7.08M (14.36M vs. 21.44M) parameters from Tiny, Small, and Base, respectively, corresponding to reductions of $31.1\%$, $31.2\%$, and $31.3\%$. Together with the near-identical Base-scale performance and closely matched pretraining dynamics, this indicates that the expansion function can be transferred to the self-supervised loss without requiring the original ISTA dictionary in every layer.

\begin{figure}[t]
\centering
\includegraphics[width=0.9\columnwidth]{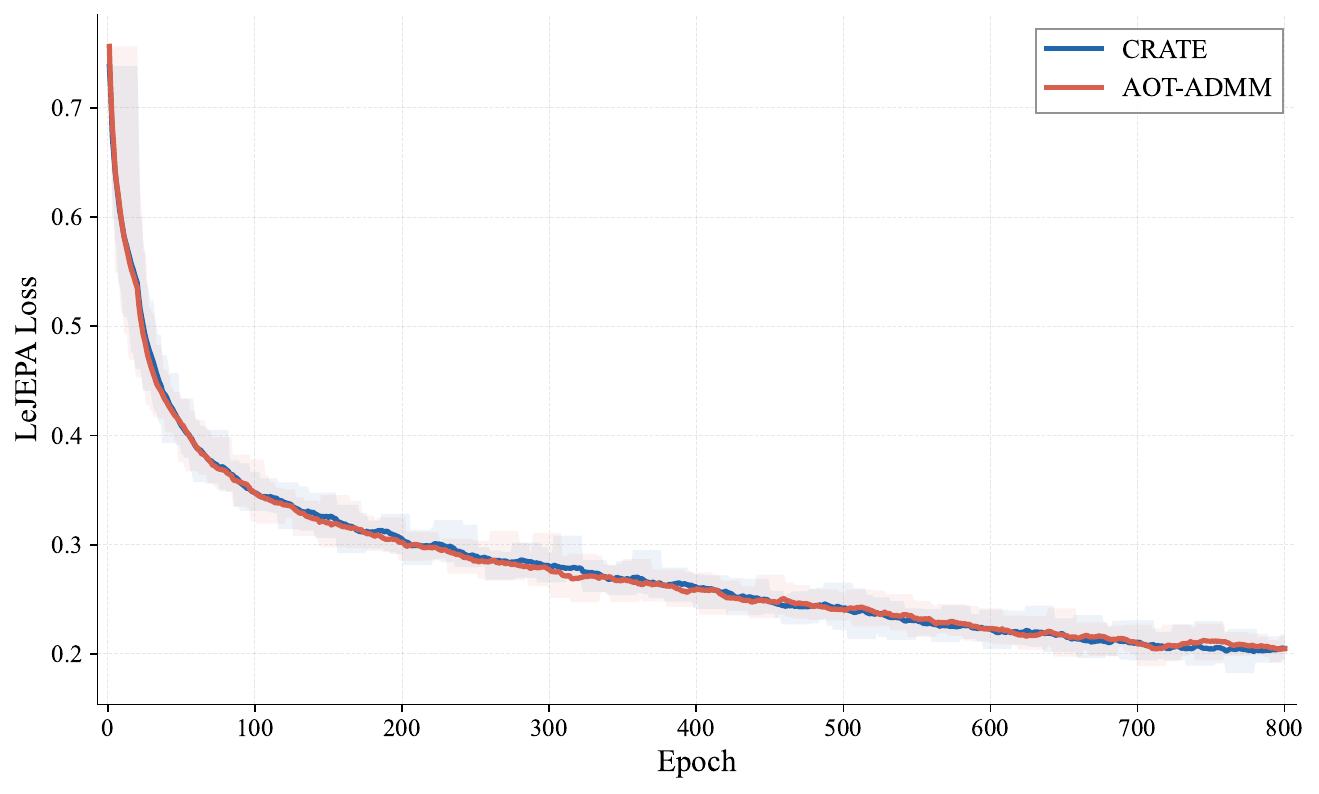}
\caption{Comparison of LeJEPA pretraining loss between CRATE and \ourmodel.
Under identical settings, the two models exhibit closely matched convergence
trajectories and reach comparable final loss values after 800 epochs.}
\label{fig:training_dynamics}
\end{figure}

\begin{table}[t]
\centering
{\setlength{\tabcolsep}{1mm}%
\begin{tabular}{lccc}
\toprule
Model & Params & Pretrain & Acc. (\%) \\
\midrule
\multicolumn{4}{l}{\textit{CIFAR-10}} \\
CRATE     & 21.92M & IN-1K & 96.01 \\
\ourmodel & 14.84M & IN-1K & 95.25 \\
\midrule
\multicolumn{4}{l}{\textit{CIFAR-100}} \\
CRATE     & 21.92M & IN-1K & 81.04 \\
\ourmodel & 14.84M & IN-1K & 80.26 \\
\bottomrule
\end{tabular}%
}
\caption{ImageNet-1K self-supervised pretraining (200 epochs) followed by full CIFAR fine-tuning.}
\label{tab:transfer}
\end{table}

We further assess the transferability of \ourmodel{} under large-scale self-supervised pretraining by pretraining on ImageNet-1K and subsequently fine-tuning on CIFAR-10 and CIFAR-100. As shown in Table~\ref{tab:transfer}, \ourmodel{} achieves $95.25\%$ accuracy on CIFAR-10 and $80.26\%$ on CIFAR-100 with 14.84M parameters, compared with $96.01\%$ and $81.04\%$ for CRATE with 21.92M parameters. These results demonstrate that \ourmodel{} maintains competitive transfer performance while using approximately $32\%$ fewer parameters, highlighting its improved parameter efficiency under large-scale self-supervised pretraining.

\subsection{Comparison with AoT}

Wang et al.'s AoT derives an attention-only architecture by interpreting
representation learning as iterative denoising toward a mixture of
low-dimensional subspaces \citep{wang2025aot}. In contrast, \ourmodel is
derived by combining LeJEPA-based global expansion with ADMM optimization of
the remaining compression and sparsity terms. To isolate the effect of the
architectural formulation, we compare the two models at nearly identical
parameter counts under the same LeJEPA training protocol.

As shown in Table~\ref{tab:aot_compare}, \ourmodel consistently outperforms Wang et al.'s AoT across Tiny, Small, and Base configurations on both CIFAR-10 and CIFAR-100, with nearly identical parameter counts. At the Tiny scale, our model achieves accuracy improvements over AoT of 4.91$\%$ (84.22$\%$ vs.\ 79.31$\%$) on CIFAR-10 and 6.55$\%$ (57.38$\%$ vs.\ 50.83$\%$) on CIFAR-100. At the Small scale, the accuracy gains are 3.87$\%$ (87.23$\%$ vs.\ 83.36$\%$) on CIFAR-10 and 5.90$\%$ (62.41$\%$ vs.\ 56.51$\%$) on CIFAR-100. At the Base scale, our model outperforms AoT by 4.68$\%$ (88.88$\%$ vs.\ 84.20$\%$) in accuracy on CIFAR-10 and by 6.06$\%$ (63.54$\%$ vs.\ 57.48$\%$) on CIFAR-100. These results indicate that removing the MLP alone is not sufficient to obtain an effective attention-only Transformer; the optimization principle underlying the architectural derivation also plays an important role.

\begin{table}[t]
\centering
{\setlength{\tabcolsep}{1mm}%
\begin{tabular}{llrc}
\toprule
Scale & Model & Params & Acc. (\%) \\
\midrule
\multicolumn{4}{l}{\textit{CIFAR-10}} \\
\multirow{2}{*}{Tiny}
  & AoT       & 3.63M  & 79.31 \\
  & \ourmodel & 3.64M  & 84.22 \\
\cmidrule{1-4}
\multirow{2}{*}{Small}
  & AoT       & 8.11M  & 83.36 \\
  & \ourmodel & 8.11M  & 87.23 \\
\cmidrule{1-4}
\multirow{2}{*}{Base}
  & AoT       & 14.35M & 84.20 \\
  & \ourmodel & 14.36M & 88.88 \\
\midrule
\multicolumn{4}{l}{\textit{CIFAR-100}} \\
\multirow{2}{*}{Tiny}
  & AoT       & 3.63M  & 50.83 \\
  & \ourmodel & 3.64M  & 57.38 \\
\cmidrule{1-4}
\multirow{2}{*}{Small}
  & AoT       & 8.11M  & 56.51 \\
  & \ourmodel & 8.11M  & 62.41 \\
\cmidrule{1-4}
\multirow{2}{*}{Base}
  & AoT       & 14.35M & 57.48 \\
  & \ourmodel & 14.36M & 63.54 \\
\bottomrule
\end{tabular}%
}
\caption{Comparison with Wang et al.'s AoT under the LeJEPA evaluation.}
\label{tab:aot_compare}
\end{table}

\subsection{Black-Box Transformer Validation}
We further test whether the proximal update can remove MLP blocks from a standard Vision Transformer \citep{dosovitskiy2021vit}. This control is not identical to the white-box CRATE architecture: it retains the standard ViT patch embedding, self-attention, normalization, and classifier, while replacing every MLP branch with a simple ReLU
activation. To compensate for the reduced capacity, we initialize from pretrained weights and use hard-label knowledge distillation following DeiT \citep{touvron2021deit}. A pretrained ViT-Tiny teacher provides its highest-confidence class as a pseudo-label. The student minimizes an equal-weighted sum of cross-entropy losses against the ground-truth label and the teacher pseudo-label.

\begin{table}[t]
\centering
\begin{tabular}{llcc}
\toprule
Scale & Model & Params & CIFAR-10 \\
\midrule
\multirow{2}{*}{Tiny}  & VIT-T     & 5.3M  & 92.19 \\
                       & VIT-T-AoT & 1.8M  & 91.37 \\
\midrule
\multirow{2}{*}{Small} & VIT-S     & 21.4M & 93.58 \\
                       & VIT-S-AoT & 7.2M  & 93.67 \\
\bottomrule
\end{tabular}
\caption{CIFAR-10 validation of standard ViTs and their MLP-free variants using hard-label knowledge distillation.}
\label{tab:vit}
\end{table}

Table~\ref{tab:vit} shows that removing all MLP blocks reduces the parameter count by approximately $66\%$ at both model scales. For the Tiny model, the parameter count decreases from 5.3M to 1.8M, while CIFAR-10 accuracy drops only slightly from $92.19\%$ to $91.37\%$, corresponding to a gap of $0.82$ percentage points. For the Small model, the parameter count is reduced from 21.4M to 7.2M, yet the accuracy increases marginally from $93.58\%$ to $93.67\%$. Although these results do not establish that MLP blocks are universally redundant, they indicate that optimization-derived proximal updates can substantially improve parameter efficiency while preserving competitive performance, providing a principled route toward MLP-free black-box Transformers.

\section{Conclusion}
We developed an attention-only white-box Transformer by jointly reconsidering self-supervised learning and optimization-based architecture derivation. Our key theoretical observation is that LeJEPA's SIGReg and the global expansion term $R(Z)$ share a consistent optimum under the isotropic covariance constraint. This connection allows the expansion term to be incorporated into the self-supervised training loss, while the remaining compression and sparsity terms are used to derive the network architecture. By applying ADMM and unrolling the resulting updates, we obtain an attention-only architecture in which MSSA implements the compression step and ReLU realizes the sparsity update. The resulting \ourmodel requires neither the ISTA dictionary nor MLP layers, establishing a unified connection among the white-box objective, the self-supervised loss, and the forward architecture.

Experiments under identical LeJEPA pretraining show that \ourmodel maintains performance comparable to CRATE on CIFAR-10 and CIFAR-100 while reducing the parameter count by approximately $31\%$. After ImageNet-1K self-supervised pretraining and full CIFAR fine-tuning, \ourmodel achieves $95.25\%$ on CIFAR-10 and $80.26\%$ on CIFAR-100, compared with CRATE's $96.01\%$ and $81.04\%$, while reducing the parameter count from 21.92M to 14.84M. At nearly identical parameter counts, \ourmodel also consistently outperforms Wang et al.'s AoT across different model scales, demonstrating that the optimization principle underlying an attention-only architecture is important for its effectiveness. Experiments on standard ViTs further show that the ReLU update can replace MLP blocks while preserving competitive performance, suggesting that the proposed formulation is not limited to white-box architectures. The theoretical analysis and empirical results both substantiate the partial redundancy of MLP modules in white-box Transformer architectures under self-supervised training, while providing further motivation for the development of attention-only Transformer architectures.

\bibliography{references}

@inproceedings{vaswani2017attention,
  title     = {Attention Is All You Need},
  author    = {Vaswani, Ashish and Shazeer, Noam and Parmar, Niki and Uszkoreit, Jakob and Jones, Llion and Gomez, Aidan N. and Kaiser, Lukasz and Polosukhin, Illia},
  booktitle = {Advances in Neural Information Processing Systems},
  volume    = {30},
  year      = {2017}
}

@inproceedings{yu2020mcr2,
  title     = {Learning Diverse and Discriminative Representations via the Principle of Maximal Coding Rate Reduction},
  author    = {Yu, Yaodong and Chan, Kwan Ho Ryan and You, Chong and Song, Chaobing and Ma, Yi},
  booktitle = {Advances in Neural Information Processing Systems},
  volume    = {33},
  pages     = {9422--9434},
  year      = {2020}
}

@inproceedings{chen2020simclr,
  title     = {A Simple Framework for Contrastive Learning of Visual Representations},
  author    = {Chen, Ting and Kornblith, Simon and Norouzi, Mohammad and Hinton, Geoffrey},
  booktitle = {Proceedings of the 37th International Conference on Machine Learning},
  series    = {Proceedings of Machine Learning Research},
  volume    = {119},
  pages     = {1597--1607},
  year      = {2020},
  publisher = {PMLR}
}

@inproceedings{grill2020byol,
  title     = {Bootstrap Your Own Latent: A New Approach to Self-Supervised Learning},
  author    = {Grill, Jean-Bastien and Strub, Florian and Altch{\'e}, Florent and Tallec, Corentin and Richemond, Pierre H. and Buchatskaya, Elena and Doersch, Carl and Avila Pires, Bernardo and Guo, Zhaohan Daniel and Azar, Mohammad Gheshlaghi and Piot, Bilal and Kavukcuoglu, Koray and Munos, R{\'e}mi and Valko, Michal},
  booktitle = {Advances in Neural Information Processing Systems},
  volume    = {33},
  pages     = {21271--21284},
  year      = {2020}
}

@inproceedings{caron2021dino,
  title     = {Emerging Properties in Self-Supervised Vision Transformers},
  author    = {Caron, Mathilde and Touvron, Hugo and Misra, Ishan and J{\'e}gou, Herv{\'e} and Mairal, Julien and Bojanowski, Piotr and Joulin, Armand},
  booktitle = {Proceedings of the IEEE/CVF International Conference on Computer Vision},
  pages     = {9650--9660},
  year      = {2021}
}

@inproceedings{he2022mae,
  title     = {Masked Autoencoders Are Scalable Vision Learners},
  author    = {He, Kaiming and Chen, Xinlei and Xie, Saining and Li, Yanghao and Doll{\'a}r, Piotr and Girshick, Ross},
  booktitle = {Proceedings of the IEEE/CVF Conference on Computer Vision and Pattern Recognition},
  pages     = {16000--16009},
  year      = {2022}
}

@inproceedings{assran2023ijepa,
  title     = {Self-Supervised Learning from Images with a Joint-Embedding Predictive Architecture},
  author    = {Assran, Mahmoud and Duval, Quentin and Misra, Ishan and Bojanowski, Piotr and Vincent, Pascal and Rabbat, Michael and Ballas, Nicolas},
  booktitle = {Proceedings of the IEEE/CVF Conference on Computer Vision and Pattern Recognition},
  pages     = {15619--15629},
  year      = {2023}
}

@inproceedings{chen2021mocov3,
  title     = {An Empirical Study of Training Self-Supervised Vision Transformers},
  author    = {Chen, Xinlei and Xie, Saining and He, Kaiming},
  booktitle = {Proceedings of the IEEE/CVF International Conference on Computer Vision},
  pages     = {9640--9649},
  year      = {2021}
}

@inproceedings{dong2021attention,
  title     = {Attention Is Not All You Need: Pure Attention Loses Rank Doubly Exponentially with Depth},
  author    = {Dong, Yihe and Cordonnier, Jean-Baptiste and Loukas, Andreas},
  booktitle = {Proceedings of the 38th International Conference on Machine Learning},
  series    = {Proceedings of Machine Learning Research},
  volume    = {139},
  pages     = {2793--2803},
  year      = {2021},
  publisher = {PMLR}
}

@inproceedings{he2024simplifying,
  title     = {Simplifying Transformer Blocks},
  author    = {He, Bobby and Hofmann, Thomas},
  booktitle = {International Conference on Learning Representations},
  year      = {2024}
}

@inproceedings{gregor2010lista,
  title     = {Learning Fast Approximations of Sparse Coding},
  author    = {Gregor, Karol and LeCun, Yann},
  booktitle = {Proceedings of the 27th International Conference on Machine Learning},
  pages     = {399--406},
  year      = {2010}
}

@article{beck2009fista,
  title   = {A Fast Iterative Shrinkage-Thresholding Algorithm for Linear Inverse Problems},
  author  = {Beck, Amir and Teboulle, Marc},
  journal = {SIAM Journal on Imaging Sciences},
  volume  = {2},
  number  = {1},
  pages   = {183--202},
  year    = {2009},
  doi     = {10.1137/080716542}
}

@article{chen1998basis,
  title   = {Atomic Decomposition by Basis Pursuit},
  author  = {Chen, Scott Shaobing and Donoho, David L. and Saunders, Michael A.},
  journal = {SIAM Journal on Scientific Computing},
  volume  = {20},
  number  = {1},
  pages   = {33--61},
  year    = {1998},
  doi     = {10.1137/S1064827596304010}
}

@article{parikh2014proximal,
  title   = {Proximal Algorithms},
  author  = {Parikh, Neal and Boyd, Stephen},
  journal = {Foundations and Trends in Optimization},
  volume  = {1},
  number  = {3},
  pages   = {127--239},
  year    = {2014},
  doi     = {10.1561/2400000003}
}

@article{daubechies2004iterative,
  title   = {An Iterative Thresholding Algorithm for Linear Inverse Problems with a Sparsity Constraint},
  author  = {Daubechies, Ingrid and Defrise, Michel and De Mol, Christine},
  journal = {Communications on Pure and Applied Mathematics},
  volume  = {57},
  number  = {11},
  pages   = {1413--1457},
  year    = {2004},
  doi     = {10.1002/cpa.20042}
}

@article{ouyang2015linearized,
  title   = {An Accelerated Linearized Alternating Direction Method of Multipliers},
  author  = {Ouyang, Yuyuan and Chen, Yunmei and Lan, Guanghui and Pasiliao, Eduardo, Jr.},
  journal = {SIAM Journal on Imaging Sciences},
  volume  = {8},
  number  = {1},
  pages   = {644--681},
  year    = {2015},
  doi     = {10.1137/14095697X}
}

@article{eckstein1992douglas,
  title   = {On the Douglas--Rachford Splitting Method and the Proximal Point Algorithm for Maximal Monotone Operators},
  author  = {Eckstein, Jonathan and Bertsekas, Dimitri P.},
  journal = {Mathematical Programming},
  volume  = {55},
  number  = {1--3},
  pages   = {293--318},
  year    = {1992},
  doi     = {10.1007/BF01581204}
}

@article{yu2023crate,
  title={White-Box Transformers via Sparse Rate Reduction},
  author={Yu, Yaodong and Buchanan, Sam and Pai, Druv and Chu, Tianzhe and Wu, Ziyang and Tong, Shengbang and Haeffele, Benjamin D. and Ma, Yi},
  journal={Advances in Neural Information Processing Systems},
  volume={36},
  year={2023}
}

@article{yang2024cratealpha,
  title={Scaling White-Box Transformers for Vision},
  author={Yang, Jinrui and Li, Xianhang and Pai, Druv and Zhou, Yuyin and Ma, Yi and Yu, Yaodong and Xie, Cihang},
  journal={arXiv preprint arXiv:2405.20299},
  year={2024}
}

@article{wu2024tost,
  title={Token Statistics Transformer: Linear-Time Attention via Variational Rate Reduction},
  author={Wu, Ziyang and Ding, Tianjiao and Lu, Yifu and Pai, Druv and Zhang, Jingyuan and Wang, Weida and Yu, Yaodong and Ma, Yi and Haeffele, Benjamin D.},
  journal={arXiv preprint arXiv:2412.17810},
  year={2024}
}

@article{yu2023segmentation,
  title={Emergence of Segmentation with Minimalistic White-Box Transformers},
  author={Yu, Yaodong and Pai, Druv and Buchanan, Sam and Wu, Ziyang and Tong, Shengbang and Haeffele, Benjamin D. and Ma, Yi},
  journal={arXiv preprint arXiv:2308.16271},
  year={2023}
}

@inproceedings{pai2024cratemae,
  title={Masked Completion via Structured Diffusion with White-Box Transformers},
  author={Pai, Druv and Wu, Ziyang and Buchanan, Sam and Yu, Yaodong and Ma, Yi},
  booktitle={International Conference on Learning Representations},
  year={2024}
}

@article{tong2023empssl,
  title={{EMP-SSL}: Towards Self-Supervised Learning in One Training Epoch},
  author={Tong, Shengbang and Chen, Yubei and Ma, Yi and LeCun, Yann},
  journal={arXiv preprint arXiv:2304.03977},
  year={2023}
}

@article{balestriero2025lejepa,
  title={{LeJEPA}: Provable and Scalable Self-Supervised Learning Without the Heuristics},
  author={Balestriero, Randall and LeCun, Yann},
  journal={arXiv preprint arXiv:2511.08544},
  year={2025}
}

@inproceedings{wang2025aot,
  title={Attention-Only Transformers via Unrolled Subspace Denoising},
  author={Wang, Peng and Lu, Yifu and Yu, Yaodong and Pai, Druv and Qu, Qing and Ma, Yi},
  booktitle={Proceedings of the 42nd International Conference on Machine Learning},
  series={Proceedings of Machine Learning Research},
  volume={267},
  pages={63840--63859},
  year={2025}
}

@article{boyd2011admm,
  title={Distributed Optimization and Statistical Learning via the Alternating Direction Method of Multipliers},
  author={Boyd, Stephen and Parikh, Neal and Chu, Eric and Peleato, Borja and Eckstein, Jonathan},
  journal={Foundations and Trends in Machine Learning},
  volume={3},
  number={1},
  pages={1--122},
  year={2011}
}

@techreport{krizhevsky2009cifar,
  title={Learning Multiple Layers of Features from Tiny Images},
  author={Krizhevsky, Alex},
  institution={University of Toronto},
  year={2009}
}

@inproceedings{deng2009imagenet,
  title={ImageNet: A Large-Scale Hierarchical Image Database},
  author={Deng, Jia and Dong, Wei and Socher, Richard and Li, Li-Jia and Li, Kai and Fei-Fei, Li},
  booktitle={IEEE Conference on Computer Vision and Pattern Recognition},
  pages={248--255},
  year={2009}
}

@inproceedings{loshchilov2019adamw,
  title={Decoupled Weight Decay Regularization},
  author={Loshchilov, Ilya and Hutter, Frank},
  booktitle={International Conference on Learning Representations},
  year={2019}
}

@inproceedings{dosovitskiy2021vit,
  title={An Image Is Worth 16x16 Words: Transformers for Image Recognition at Scale},
  author={Dosovitskiy, Alexey and Beyer, Lucas and Kolesnikov, Alexander and Weissenborn, Dirk and Zhai, Xiaohua and Unterthiner, Thomas and Dehghani, Mostafa and Minderer, Matthias and Heigold, Georg and Gelly, Sylvain and Uszkoreit, Jakob and Houlsby, Neil},
  booktitle={International Conference on Learning Representations},
  year={2021}
}

@inproceedings{touvron2021deit,
  title={Training Data-Efficient Image Transformers and Distillation through Attention},
  author={Touvron, Hugo and Cord, Matthieu and Douze, Matthijs and Massa, Francisco and Sablayrolles, Alexandre and J{\'e}gou, Herv{\'e}},
  booktitle={International Conference on Machine Learning},
  pages={10347--10357},
  year={2021}
}

@article{doi:10.1137/090777761,
author = {Yang, Junfeng and Zhang, Yin},
title = {Alternating Direction Algorithms for {$\ell_1$}-Problems in Compressive Sensing},
journal = {SIAM Journal on Scientific Computing},
volume = {33},
number = {1},
pages = {250-278},
year = {2011},
}

\clearpage
\appendix
\setcounter{secnumdepth}{2}
\numberwithin{equation}{section}

\section*{Appendix}

\section{Proof of Theorem 1}
\label{app:theorem1}

This appendix provides the complete proof of Theorem~1 stated in the main
text, together with a global extension of the correspondence that it
establishes between the two expansion objectives.

To avoid collision with the notation of the main text
($\gamma$: compression-gradient coefficient;
$\eta$: unrolled step size; and
$\lambda$: sparsity weight), we define
\begin{equation*}
  c=\frac{d}{\epsilon^2}>0,
  \qquad
  \theta=\frac{c}{1+c}\in(0,1),
  \qquad
  A=\Sigma-I_d.
  \label{eq:sup_notation}
\end{equation*}
Let $z\in\mathbb R^d$ denote the population embedding on which SIGReg is
evaluated, written as $Z'$ in the main text, with
\begin{equation*}
  \mu=\mathbb E[z]=0,
  \qquad
  \Sigma=\mathbb E\bigl[zz^\top\bigr].
  \label{eq:sup_covariance}
\end{equation*}
Centering and the fixed normalized-energy condition of Theorem~1 are imposed
by the normalization preceding the regularizer. Throughout this appendix the
two objectives are compared on
\begin{equation*}
  \mathcal S
  =
  \bigl\{
    \Sigma\in\mathbb S^{d}_{+}
    :
    \tr(\Sigma)=d
  \bigr\}.
  \label{eq:sup_normalization}
\end{equation*}
Since $A=\Sigma-I_d$, every $\Sigma\in\mathcal S$ satisfies $\tr(A)=0$, a
fact used repeatedly below. On the unconstrained cone $\mathbb S^{d}_{+}$ one
has $R(\alpha\Sigma)\to\infty$ as $\alpha\to\infty$ for every $\Sigma\neq0$,
so $R$ attains no maximum there. Both optima exist on $\mathcal S$, by
Eqs.~\eqref{eq:sup_sigreg_argmin} and \eqref{eq:sup_rate_argmax} below.

The coding rate, expressed as a function of the population covariance, is
\begin{equation}
  R(\Sigma)
  =
  \frac{1}{2}
  \log\det(I_d+c\Sigma),
  \label{eq:sup_rate}
\end{equation}
which is the population form of the empirical objective: writing
$\hat\Sigma=\frac1nZZ^\top$ for the sample covariance, one has
$\frac12\log\det\bigl(I_d+\frac{d}{n\epsilon^2}ZZ^\top\bigr)
 =\frac12\log\det(I_d+c\hat\Sigma)$
with the same constant $c=d/\epsilon^2$.

\subsection{Covariance Component of SIGReg}

For $a\sim\operatorname{Unif}(\mathbb S^{d-1})$, let
\begin{equation*}
  u_a=a^\top z,
  \qquad
  \varphi_a(t)=\mathbb E[e^{\mathrm{i}tu_a}],
  \qquad
  s_a=a^\top(\Sigma-I_d)a,
  \label{eq:sup_projection}
\end{equation*}
where the projections are taken unstandardized; for $a^\top\Sigma a>0$ one
has $1+s_a=a^\top\Sigma a$, and a per-direction rescaling of $u_a$ replaces
$\varphi_a$ by the characteristic function of $u_a/\sqrt{1+s_a}$, in which
$s_a$ does not appear. The Epps--Pulley statistic used by SIGReg is
\begin{equation}
  \SIGReg
  =
  \mathbb E_a
  \int_{\mathbb R}
  \left|
    \varphi_a(t)-e^{-t^2/2}
  \right|^2
  w(t)\,dt,
  \label{eq:sup_ep}
\end{equation}
with weight function $w\geq0$ positive Lebesgue-almost everywhere and
satisfying $\int_{\mathbb R}(1+t^{8})w(t)\,dt<\infty$. The moment condition
gives $C_w<\infty$ in Eq.~\eqref{eq:sup_sigreg2}; positivity almost
everywhere is used in the first part of the proof.

Let $\kappa_m(a)$ denote the $m$-th cumulant of $u_a$ and set
\begin{equation*}
  \Delta_{{>}2}
  =
  \sup_a\sup_{m\geq3}\lvert\kappa_m(a)\rvert,
  \qquad
  \Delta
  =
  \max\bigl\{
    \textstyle\sup_a\lvert s_a\rvert,\ \Delta_{{>}2}
  \bigr\}.
  \label{eq:sup_delta}
\end{equation*}
Around the standard isotropic Gaussian target, and assuming
$\Delta_{{>}2}<\infty$ so that the series may be integrated against $w$ term
by term, the projected characteristic function admits the local expansion
\begin{equation}
\begin{aligned}
&\varphi_a(t)-e^{-t^2/2}\\
  &=
  e^{-t^2/2}
  \left[
    -\frac{s_a t^2}{2}
    +
    \sum_{m\geq3}
    \kappa_m(a)\frac{(\mathrm{i}t)^m}{m!}
    +
    O(\Delta^2)
  \right].
\end{aligned}
  \label{eq:sup_cf_expansion}
\end{equation}

Squaring Eq.~\eqref{eq:sup_cf_expansion}, the terms free of any cumulant of
order three or higher reduce to
\begin{equation}
  e^{-t^2}\frac{s_a^2t^4}{4},
  \label{eq:sup_covariance_integrand}
\end{equation}
while every remaining term carries at least one factor $\kappa_m(a)$ with
$m\geq3$. We take this cumulant order as the grouping criterion, collecting
the former into $\mathcal L_{\mathrm{SIGReg}}^{(2)}$ and the latter into the
term $\mathcal L_{>2}$ used in the main text. The grouping is not an
orthogonal decomposition: the cross terms $s_a\kappa_m(a)$ enter
$\mathcal L_{>2}$ at the same order in $\Delta$ as $s_a^2$ enters
$\mathcal L_{\mathrm{SIGReg}}^{(2)}$. On the Gaussian slice
\begin{equation}
  \mathcal G
  =
  \bigl\{
    \mathcal N(0,\Sigma)
    :
    \Sigma\in\mathcal S
  \bigr\},
  \qquad
  \kappa_m(a)\equiv0
  \quad (m\geq3),
  \label{eq:sup_gaussian_slice}
\end{equation}
one has $\mathcal L_{>2}=0$ and the two components separate exactly;
elsewhere they separate up to $O(\Delta_{{>}2})$. The qualifier ``at the
covariance level'' in Theorem~1 refers to this restriction.

Integrating Eq.~\eqref{eq:sup_covariance_integrand} against $w(t)$ and
averaging over $a$ gives
\begin{equation}
\begin{aligned}
   & \mathcal L_{\mathrm{SIGReg}}^{(2)}(\Sigma)
  =
  C_w\,
  \mathbb E_{a\sim\operatorname{Unif}(\mathbb S^{d-1})}
  \left[
    (a^\top A a)^2
  \right],\\
  &C_w
  =
  \frac{1}{4}
  \int_{\mathbb R}
  t^4e^{-t^2}w(t)\,dt
  \in(0,\infty).
\end{aligned}
  \label{eq:sup_sigreg2}
\end{equation}

\subsection{Proof of the Shared Optimum and Local Correspondence}

\begin{proof}[Proof of Theorem~1]
The integrand of Eq.~\eqref{eq:sup_ep} is nonnegative, so $\SIGReg=0$ forces
$\varphi_a(t)=e^{-t^2/2}$ for almost every $t$ and almost every
$a\in\mathbb S^{d-1}$. Since $w>0$ almost everywhere and both sides are
continuous in $t$, the identity holds for every $t$; continuity of
$a\mapsto\varphi_a(t)$ extends it to every $a$. Every one-dimensional
projection of $z$ is then standard normal, and the Cram\'er--Wold device
gives
\begin{equation}
  \SIGReg=0
  \iff
  z\sim\mathcal N(0,I_d)
  \implies
  \Sigma=I_d,
  \label{eq:sup_cramer_wold}
\end{equation}
the minimum being attained on $\mathcal S$ because $\mathcal N(0,I_d)$
satisfies $\tr(\Sigma)=d$. Equation~\eqref{eq:sup_cramer_wold} characterizes
the minimizer as a distribution; $R$ depends on $\Sigma$ alone.

Restricted to the covariance level, the same conclusion follows in closed
form. For $a\sim\operatorname{Unif}(\mathbb S^{d-1})$, the fourth-order
spherical moments satisfy
\begin{equation}
  \mathbb E[a_i a_j a_k a_l]
  =
  \frac{
    \delta_{ij}\delta_{kl}
    +
    \delta_{ik}\delta_{jl}
    +
    \delta_{il}\delta_{jk}
  }{d(d+2)}.
  \label{eq:sup_spherical_moment}
\end{equation}

Using Eq.~\eqref{eq:sup_spherical_moment},
\begin{align}
  \mathbb E_a[(a^\top A a)^2]
  &=
  \sum_{i,j,k,l}
  A_{ij}A_{kl}
  \mathbb E[a_i a_j a_k a_l]
  \nonumber\\
  &=
  \frac{
    \tr(A)^2
    +
    \tr(A^\top A)
    +
    \tr(A^2)
  }{d(d+2)}.
  \label{eq:sup_spherical_expansion}
\end{align}

Since $A$ is symmetric,
\begin{equation}
  \tr(A^\top A)
  =
  \tr(A^2)
  =
  \lVert A\rVert_F^2.
  \label{eq:sup_symmetric_A}
\end{equation}

Together with $\tr(A)=0$,
\begin{equation}
  \mathbb E_a[(a^\top A a)^2]
  =
  \frac{2}{d(d+2)}
  \lVert A\rVert_F^2.
  \label{eq:sup_spherical_identity}
\end{equation}

Substituting Eq.~\eqref{eq:sup_spherical_identity} into
Eq.~\eqref{eq:sup_sigreg2},
\begin{equation}
  \mathcal L_{\mathrm{SIGReg}}^{(2)}(\Sigma)
  =
  \frac{2C_w}{d(d+2)}
  \lVert\Sigma-I_d\rVert_F^2.
  \label{eq:sup_sigreg_frobenius}
\end{equation}

Since $C_w>0$, the right-hand side is nonnegative and vanishes if and only if
$\Sigma=I_d$, in agreement with Eq.~\eqref{eq:sup_cramer_wold}. Hence
\begin{equation}
  \arg\min_{\Sigma\in\mathcal S}
  \mathcal L_{\mathrm{SIGReg}}^{(2)}(\Sigma)
  =
  \{I_d\}.
  \label{eq:sup_sigreg_argmin}
\end{equation}

We next determine the maximizer of the coding rate. Let
$\nu_1,\ldots,\nu_d$ denote the eigenvalues of $\Sigma$. Since
$\Sigma\succeq0$ and $\tr(\Sigma)=d$,
\begin{equation}
  \nu_i\geq0,
  \qquad
  \sum_{i=1}^d\nu_i=d,
  \label{eq:sup_eigenvalue_constraint}
\end{equation}
and Eq.~\eqref{eq:sup_rate} becomes
\begin{equation}
  R(\Sigma)
  =
  \frac{1}{2}
  \sum_{i=1}^d
  \log(1+c\nu_i).
  \label{eq:sup_rate_eigenvalues}
\end{equation}

The function $g(x)=\log(1+cx)$ is strictly concave on $[0,\infty)$ since
\begin{equation}
  g''(x)
  =
  -\frac{c^2}{(1+cx)^2}
  <0.
  \label{eq:sup_concavity}
\end{equation}

Jensen's inequality applied to Eq.~\eqref{eq:sup_rate_eigenvalues} gives
\begin{align}
  R(\Sigma)
  &\leq
  \frac{d}{2}
  \log
  \left(
    1+
    \frac{c}{d}
    \sum_{i=1}^d\nu_i
  \right)
  \nonumber\\
  &=
  \frac{d}{2}\log(1+c)
  =
  R(I_d).
  \label{eq:sup_jensen}
\end{align}

By strict concavity of $g$, equality holds if and only if
$\nu_1=\cdots=\nu_d$, and the trace constraint forces
$\nu_1=\cdots=\nu_d=1$, i.e.\ $\Sigma=I_d$. Hence
\begin{equation}
  \arg\max_{\Sigma\in\mathcal S}
  R(\Sigma)
  =
  \{I_d\}.
  \label{eq:sup_rate_argmax}
\end{equation}

Equations~\eqref{eq:sup_sigreg_argmin} and \eqref{eq:sup_rate_argmax} give
the shared-optimum statement in Theorem 1.

It remains to establish the local correspondence around $\Sigma=I_d$.
Substituting $\Sigma=I_d+A$ into Eq.~\eqref{eq:sup_rate},
\begin{align}
  R(I_d+A)
  &=
  \frac{1}{2}
  \log\det\left(
    I_d+c(I_d+A)
  \right)
  \nonumber\\
  &=
  \frac{1}{2}
  \log\det\left(
    (1+c)I_d+cA
  \right)
  \nonumber\\
  &=
  \frac{d}{2}\log(1+c)
  +
  \frac{1}{2}
  \log\det(I_d+\theta A).
  \label{eq:sup_rate_factorization}
\end{align}

On $\mathcal S$ one has $\lVert A\rVert_2\leq d-1$, so the condition
$\lVert A\rVert_2<1/\theta$ defines a proper subset of $\mathcal S$
containing $I_d$. On that subset the log-determinant expansion gives
\begin{equation}
  \log\det(I_d+\theta A)
  =
  \theta\tr(A)
  -
  \frac{\theta^2}{2}\tr(A^2)
  +
  O\bigl(\lVert A\rVert_F^3\bigr).
  \label{eq:sup_logdet_expansion}
\end{equation}

Using $\tr(A)=0$ and $\tr(A^2)=\lVert A\rVert_F^2$,
\begin{equation}
  \log\det(I_d+\theta A)
  =
  -\frac{\theta^2}{2}
  \lVert A\rVert_F^2
  +
  O\bigl(\lVert A\rVert_F^3\bigr).
  \label{eq:sup_logdet_reduced}
\end{equation}

With $R(I_d)=\frac{d}{2}\log(1+c)$, Eqs.~\eqref{eq:sup_rate_factorization}
and \eqref{eq:sup_logdet_reduced} give
\begin{equation}
  R(I_d)-R(\Sigma)
  =
  \frac{\theta^2}{4}
  \lVert\Sigma-I_d\rVert_F^2
  +
  O\left(
    \lVert\Sigma-I_d\rVert_F^3
  \right).
  \label{eq:sup_rate_gap}
\end{equation}

Equation~\eqref{eq:sup_sigreg_frobenius} shows that
$\mathcal L_{\mathrm{SIGReg}}^{(2)}$ is a positive multiple of
$\lVert\Sigma-I_d\rVert_F^2$, and Eq.~\eqref{eq:sup_rate_gap} shows that the
leading term of the coding-rate gap is likewise a positive multiple of it, so
that
\begin{equation}
  \lim_{\Sigma\to I_d}
  \frac{
    \mathcal L_{\mathrm{SIGReg}}^{(2)}(\Sigma)
  }{
    R(I_d)-R(\Sigma)
  }
  =
  \frac{8C_w}{\theta^2 d(d+2)}
  \in(0,\infty).
  \label{eq:sup_proportionality}
\end{equation}

Equations~\eqref{eq:sup_sigreg_argmin}, \eqref{eq:sup_rate_argmax} and
\eqref{eq:sup_proportionality} prove both statements of Theorem~1.
\end{proof}

\subsection{Global Equivalence of the Two Expansion Terms}
\label{app:global_equivalence}

Theorem~1 gives a shared optimum and a leading-order correspondence at
$\Sigma=I_d$. The following proposition extends the correspondence to all of
$\mathcal S$.

\begin{proposition}
\label{prop:global_equivalence}
There exist constants $0<k_1\leq k_2<\infty$, depending only on $d$, $c$ and
$w$, such that
\begin{equation}
  k_1\bigl(R(I_d)-R(\Sigma)\bigr)
  \leq
  \mathcal L_{\mathrm{SIGReg}}^{(2)}(\Sigma)
  \leq
  k_2\bigl(R(I_d)-R(\Sigma)\bigr)
  \qquad
  \label{eq:sup_global_equivalence}
\end{equation}
for all $\Sigma\in\mathcal S$.

Consequently, for every $\tau>0$,
\begin{equation}
\begin{aligned}
    &\bigl\{
    \mathcal L_{\mathrm{SIGReg}}^{(2)}\leq k_1\tau
  \bigr\}\\
  &\subseteq
  \bigl\{
    R(I_d)-R\leq\tau
  \bigr\}\\
  &\subseteq
  \bigl\{
    \mathcal L_{\mathrm{SIGReg}}^{(2)}\leq k_2\tau
  \bigr\}.
\end{aligned}
  \label{eq:sup_sublevel}
\end{equation}

Moreover, let $\mathcal P(X)=X-\frac{\tr(X)}{d}I_d$ denote the orthogonal
projection onto the traceless symmetric matrices. Then, with
$A=\Sigma-I_d$ as in Eq.~\eqref{eq:sup_notation},
\begin{equation}
\begin{aligned}
      &\mathcal P\bigl[
    \nabla_\Sigma\mathcal L_{\mathrm{SIGReg}}^{(2)}
  \bigr]
  =
  \frac{4C_w}{d(d+2)}A,\\
    &\mathcal P\bigl[
    \nabla_\Sigma\bigl(R(I_d)-R(\Sigma)\bigr)
  \bigr]
  =
  \frac{\theta^2}{2}A
  +
  O\bigl(\lVert A\rVert_F^2\bigr),
\end{aligned}
  \label{eq:sup_gradients}
\end{equation}
so that their inner product equals
$\frac{2C_w\theta^2}{d(d+2)}\lVert A\rVert_F^2
 +O\bigl(\lVert A\rVert_F^3\bigr)$.
\end{proposition}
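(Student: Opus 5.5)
Since both quantities in \eqref{eq:sup_global_equivalence} vanish only at $\Sigma=I_d$, the plan is to compare each of them with $\lVert A\rVert_F^2$ on all of $\mathcal S$, and then chain the two comparisons. By \eqref{eq:sup_sigreg_frobenius}, $\mathcal L_{\mathrm{SIGReg}}^{(2)}(\Sigma)=\frac{2C_w}{d(d+2)}\lVert A\rVert_F^2$ holds exactly on $\mathcal S$. It therefore suffices to find constants $0<m\le M$ with
\begin{equation*}
m\lVert A\rVert_F^2\le R(I_d)-R(\Sigma)\le M\lVert A\rVert_F^2,\qquad \Sigma\in\mathcal S.
\end{equation*}
One can then take $k_1=\frac{2C_w}{d(d+2)M}$ and $k_2=\frac{2C_w}{d(d+2)m}$. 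The sublevel-set inclusions \eqref{eq:sup_sublevel} follow immediately by dividing by $k_1$ or $k_2$.

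For the two-sided bound, I would diagonalize and work with the eigenvalues. Write $\nu_i=1+\alpha_i$ with $\sum_i\alpha_i=0$ and $\alpha_i\ge-1$. By \eqref{eq:sup_rate_factorization},
\begin{equation*}
R(I_d)-R(\Sigma)=-\tfrac12\sum_i\log(1+\theta\alpha_i)=\tfrac12\sum_i h(\theta\alpha_i),\qquad h(x)=x-\log(1+x)\ge0,
\end{equation*}
where the linear term has been inserted for free because $\sum_i\alpha_i=0$. Since $\theta<1$, we have $\theta\alpha_i\in[-\theta,\theta(d-1)]$, a compact interval lying strictly inside $(-1,\infty)$. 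On that interval $h(x)/x^2$ is continuous and positive: at $x=0$ it extends to the value $1/2$. Hence it is bounded between two positive constants $m_0$ and $M_0$ that depend only on $\theta$ and $d$. Summing over $i$ and using $\sum_i\alpha_i^2=\lVert A\rVert_F^2$ gives $m=\theta^2m_0/2$ and $M=\theta^2M_0/2$. This is the only genuinely global step. I expect it to be the main obstacle, mainly in making sure the endpoint $\theta\alpha_i=-\theta$ stays away from the singularity at $-1$, which $\theta<1$ guarantees. Explicit bounds are also available: $M_0=1/(2(1-\theta)^2)$ from $h''(x)=(1+x)^{-2}$, and $m_0=1/(2(1+\theta(d-1))^2)$.

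For the gradient statement \eqref{eq:sup_gradients}, I would differentiate directly. From \eqref{eq:sup_sigreg_frobenius}, $\nabla_\Sigma\mathcal L_{\mathrm{SIGReg}}^{(2)}=\frac{4C_w}{d(d+2)}A$, which is already traceless on $\mathcal S$, so $\mathcal P$ leaves it unchanged. For the rate, $\nabla_\Sigma R=\frac c2(I_d+c\Sigma)^{-1}=\frac{\theta}{2}(I_d+\theta A)^{-1}$. I would expand the inverse as a Neumann series, $(I_d+\theta A)^{-1}=I_d-\theta A+O(\lVert A\rVert^2)$, which converges near $A=0$. The term $\frac{\theta}{2}I_d$ is removed by $\mathcal P$. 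Since $\mathcal P(A)=A$, this gives $\mathcal P[\nabla(R(I_d)-R)]=\frac{\theta^2}{2}A+O(\lVert A\rVert_F^2)$.

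Finally, the inner product is the trace inner product of the two projected gradients. It equals $\frac{4C_w}{d(d+2)}\cdot\frac{\theta^2}{2}\lVert A\rVert_F^2$ plus a remainder bounded by $\lVert A\rVert_F\cdot O(\lVert A\rVert_F^2)$, which gives the stated $\frac{2C_w\theta^2}{d(d+2)}\lVert A\rVert_F^2+O(\lVert A\rVert_F^3)$.
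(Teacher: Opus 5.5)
Your proposal is correct. The gradient part is the same computation as the paper's: differentiate the Frobenius form of $\mathcal L_{\mathrm{SIGReg}}^{(2)}$, write $\nabla_\Sigma R=\frac{\theta}{2}(I_d+\theta A)^{-1}$, expand the inverse, and let $\mathcal P$ remove the multiple of $I_d$.

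The two-sided bound is where you take a different route. The paper argues qualitatively. It forms the ratio $\mathcal L_{\mathrm{SIGReg}}^{(2)}(\Sigma)/(R(I_d)-R(\Sigma))$ on $\mathcal S\setminus\{I_d\}$ and uses the uniform cubic remainder from the local expansion of Theorem~1 to extend it continuously to $I_d$, with value $8C_w/(\theta^2d(d+2))$. It then takes $k_1,k_2$ as the minimum and maximum of this positive continuous function on the compact set $\mathcal S$.

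You instead use the exact identity $R(I_d)-R(\Sigma)=\frac12\sum_i h(\theta\alpha_i)$ together with a bound on $h(x)/x^2$ over $[-\theta,\theta(d-1)]$. This makes the continuity-at-$I_d$ and compactness steps unnecessary. It also yields explicit admissible constants, $k_1=\frac{8C_w(1-\theta)^2}{d(d+2)\theta^2}$ and $k_2=\frac{8C_w(1+\theta(d-1))^2}{d(d+2)\theta^2}$. These bracket the local ratio of Theorem~1 and show how the guaranteed equivalence degrades: $k_2/k_1=\bigl((1+\theta(d-1))/(1-\theta)\bigr)^2$ grows with $d$ and blows up as $\theta\to1$, e.g.\ as $\epsilon\to0$.

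In exchange, the paper's argument reuses Theorem~1 directly and is more generic. It needs only a common strict minimizer, compactness, and matching nondegenerate quadratic behavior at the minimizer. It gives no information about the size of $k_1$ and $k_2$.
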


\begin{proof}
Define
\begin{equation*}
  \rho(\Sigma)
  =
  \frac{
    \mathcal L_{\mathrm{SIGReg}}^{(2)}(\Sigma)
  }{
    R(I_d)-R(\Sigma)
  },
  \qquad
  \Sigma\in\mathcal S\setminus\{I_d\}.
  \label{eq:sup_ratio}
\end{equation*}

The eigenvalues of any $\Sigma\in\mathcal S$ lie in $[0,d]$, so $\mathcal S$
is compact. 

By Eqs.~\eqref{eq:sup_sigreg_argmin} and
\eqref{eq:sup_rate_argmax}, the numerator and denominator of $\rho$ are
continuous and strictly positive on $\mathcal S\setminus\{I_d\}$. On the
subset
\[
\lVert A\rVert_2\leq1/2\theta,
\]
 the eigenvalues of $\theta A$ are
bounded away from $-1$, so the remainder in Eq.~\eqref{eq:sup_rate_gap} is
bounded by $C\lVert A\rVert_F^3$ with $C$ depending only on $\theta$, hence
uniformly in the direction $A/\lVert A\rVert_F$. 

Dividing
Eq.~\eqref{eq:sup_sigreg_frobenius} by Eq.~\eqref{eq:sup_rate_gap} therefore
gives Eq.~\eqref{eq:sup_proportionality} as a limit along every sequence
$\Sigma\to I_d$ in $\mathcal S$, and $\rho$ extends continuously to
$\mathcal S$ with value $8C_w/\bigl(\theta^2d(d+2)\bigr)$ at $I_d$. 

A positive continuous function on a compact set attains a positive minimum and
a finite maximum; taking $k_1=\min_{\mathcal S}\rho$ and
$k_2=\max_{\mathcal S}\rho$ gives Eq.~\eqref{eq:sup_global_equivalence}.
Equation~\eqref{eq:sup_sublevel} follows from the two inequalities in
Eq.~\eqref{eq:sup_global_equivalence} separately: if
$\mathcal L_{\mathrm{SIGReg}}^{(2)}(\Sigma)\leq k_1\tau$, the left inequality
gives $k_1\bigl(R(I_d)-R(\Sigma)\bigr)\leq k_1\tau$ and hence the first
inclusion; if $R(I_d)-R(\Sigma)\leq\tau$, the right inequality gives the
second.

For the gradients, Eq.~\eqref{eq:sup_sigreg_frobenius} gives
$\nabla_\Sigma\mathcal L_{\mathrm{SIGReg}}^{(2)}
 =\frac{4C_w}{d(d+2)}A$, which is already traceless by $\tr(A)=0$. From
Eq.~\eqref{eq:sup_rate_factorization},
$\nabla_\Sigma\bigl[R(I_d)-R(\Sigma)\bigr]
 =-\frac{\theta}{2}(I_d+\theta A)^{-1}$; expanding
$(I_d+\theta A)^{-1}=I_d-\theta A+O(\lVert A\rVert_F^2)$ and applying
$\mathcal P$, which annihilates the multiple of $I_d$, gives the second
identity in Eq.~\eqref{eq:sup_gradients}. The stated inner product follows
from $\langle A,A\rangle=\lVert A\rVert_F^2$.
\end{proof}

Equation~\eqref{eq:sup_global_equivalence} bounds each expansion objective by
a fixed positive multiple of the other at every point of $\mathcal S$, and
Eq.~\eqref{eq:sup_sublevel} nests their sublevel sets: driving either
quantity below a given threshold drives the other below a constant multiple
of it, without restriction to a neighborhood of $I_d$.
Equation~\eqref{eq:sup_gradients} gives the corresponding first-order
statement: on the constraint tangent space the two projected gradients are
positive multiples of the same matrix to leading order, so a descent step on
either has positive inner product with a descent step on the other. SIGReg
can thus be used in place of $R(Z)$ as the expansion term in the LeJEPA
objective.

By Eq.~\eqref{eq:sup_cramer_wold}, the full statistic retains the same
optimum while characterizing it at the level of the entire distribution: $R$
is the covariance-level projection of SIGReg. Assigning the expansion role to
SIGReg thus retains the constraint imposed by $R(Z)$ and adds the
higher-order constraint carried by $\mathcal L_{>2}$, which no function of
$\Sigma$ alone can express. Equation~\eqref{eq:sup_proportionality} fixes the
scale: the proportionality constant decays as $O(d^{-2})$ while the
coding-rate gap carries the dimension-free factor $\theta^2/4$, so converting
the weight of the expansion term by this ratio preserves its effective
strength.

\section{Pseudocode of the Proposed Algorithm}

Algorithm~\ref{alg:aot_admm_pretraining} summarizes one end-to-end LeJEPA
pretraining iteration with the proposed AoT-ADMM encoder. The three
coefficients $(a_\ell,b_\ell,c_\ell)$ of the $Z$-update are parameterized as
a softmax over trainable logits, so that they remain positive and sum to one
throughout training; the logits are initialized so that the softmax outputs
$(1-\eta\gamma-\eta\rho,\eta\gamma,\eta\rho)$, recovering the unrolled ADMM
step at initialization. To stabilize the magnitudes of the recurrent states,
both $Z^{\ell+1}$ and the updated dual state $W^{\ell+1}$ are subjected to
feature-wise RMS rescaling at each layer. The threshold $\tau=\lambda/\rho$
retains the proximal interpretation of the sparsity update.

\begin{algorithm}[tb]
\caption{LeJEPA Pretraining with the AoT-ADMM Encoder}
\label{alg:aot_admm_pretraining}
\textbf{Input}: Mini-batch $\mathcal{B}$; global and local augmentations;
encoder depth $L$\\
\textbf{Parameters}: Subspace bases $\{U_{[K]}^\ell\}_{\ell=0}^{L-1}$,
branch coefficients $\{a_\ell,b_\ell,c_\ell\}_{\ell=0}^{L-1}$,
projection head $g$, and SIGReg weight $\alpha_{\mathrm{sig}}$\\
\textbf{Output}: Updated encoder and projection-head parameters
\begin{algorithmic}[1]
\STATE Generate two global views and the prescribed local views for every
image in $\mathcal{B}$.
\FOR{each augmented view $x^{(v)}$}
    \STATE $Z^{0,(v)} \leftarrow
    \operatorname{PatchEmbed}(x^{(v)})+\operatorname{PosEmbed}$
    \STATE $V^{0,(v)} \leftarrow Z^{0,(v)}$;
    $W^{0,(v)} \leftarrow 0$
    \FOR{$\ell=0$ to $L-1$}
        \STATE $A^{\ell,(v)} \leftarrow
        \MSSA(Z^{\ell,(v)}\mid U_{[K]}^\ell)$
        \STATE $Z^{\ell+1,(v)} \leftarrow
        a_\ell Z^{\ell,(v)}
        +b_\ell A^{\ell,(v)}
        +c_\ell(V^{\ell,(v)}-W^{\ell,(v)})$
        \STATE $V^{\ell+1,(v)} \leftarrow
        \operatorname{ReLU}
        (Z^{\ell+1,(v)}+W^{\ell,(v)}-\tau)$
        \STATE $W^{\ell+1,(v)} \leftarrow
        W^{\ell,(v)}+Z^{\ell+1,(v)}-V^{\ell+1,(v)}$
    \ENDFOR
    \STATE $Y^{(v)} \leftarrow
    g\!\left(\operatorname{Pool}(V^{L,(v)})\right)$
\ENDFOR
\STATE Compute $\mathcal{L}_{\mathrm{pred}}$ from the local and global
embeddings and their mean global target.
\STATE Compute
$\mathcal{L}\leftarrow
\mathcal{L}_{\mathrm{pred}}+
\alpha_{\mathrm{sig}}\SIGReg$.
\STATE Update the encoder, layerwise subspace bases, branch coefficients, and
projection head by backpropagation while maintaining the coefficient
constraints.
\STATE \textbf{return} the updated parameters.
\end{algorithmic}
\end{algorithm}

The inner loop is exactly one unrolled ADMM iteration per network layer:
MSSA performs the compression update, ReLU implements the nonnegative
soft-thresholding step, and the dual state $W$ tracks the consistency residual.
The outer part of the algorithm places this encoder inside the same multi-crop
LeJEPA objective used for all reported self-supervised experiments.

\end{document}